\documentclass{llncs}

\pdfoutput=1

\usepackage{url}
\usepackage{amsmath}
\usepackage{amsfonts}
\usepackage{mathptmx}
\DeclareMathAlphabet{\mathcal}{OMS}{cmsy}{m}{n} %
\usepackage{booktabs}
\usepackage[caption=false]{subfig}
\usepackage{relsize}

\usepackage[x11names]{xcolor}

\usepackage{tikz}
\usetikzlibrary{intersections,calc}

\DeclareMathOperator{\E}{E}     %
\DeclareMathOperator{\Var}{Var} %

\newcommand{\Rset}{\mathbb{R}}
\newcommand{\transpose}{^{\text{T}}}

\begin{document}
\pagestyle{headings}  %

\mainmatter              %

\title{Ridge Regression, Hubness, and Zero-Shot Learning\thanks{To be presented at ECML/PKDD 2015.}}

\titlerunning{Ridge Regression, Hubness, and ZSL}  %
\author{
  Yutaro Shigeto\inst{1}
  \and Ikumi Suzuki\inst{2}
  \and Kazuo Hara\inst{3}
  \and Masashi Shimbo\inst{1}
  \and\\ Yuji Matsumoto\inst{1}
}
\institute{
Nara Institute of Science and Technology, Ikoma, Nara, Japan\\
\email{\{yutaro-s,shimbo,matsu\}@is.naist.jp}\\
\and
The Institute of Statistical Mathematics, Tachikawa, Tokyo, Japan\\
\email{suzuki.ikumi@gmail.com}\\
\and
National Institute of Genetics, Mishima, Shizuoka, Japan\\
\email{kazuo.hara@gmail.com}
}

\maketitle              %

\begin{abstract}
  This paper discusses the effect of hubness in zero-shot learning, 
  when ridge regression is used to find
  a mapping between the example space to the label space.
  Contrary to the existing approach, which attempts to find
  a mapping from the example space to the label space,
  we show that
  mapping labels into the example space
  is desirable
  to suppress the emergence of hubs
  in the subsequent nearest neighbor search step.
  Assuming a simple data model,
  we prove that
  the proposed approach indeed reduces hubness.
  This was verified empirically on the tasks of
  bilingual lexicon extraction and
  image labeling:
  hubness was reduced with both of these tasks
  and the accuracy was improved accordingly.
\end{abstract}

\section{Introduction}
\label{sec:introduction}

\subsection{Background}
\label{sec:background}

In recent years, \emph{zero-shot learning} (ZSL) %
\cite{Farhadi2009,Lampert2009,Larochelle2008,Palatucci2009}
has been an active research topic in machine learning, 
computer vision, and natural language processing.
Many practical applications can be formulated as a ZSL task:
drug discovery \cite{Larochelle2008},
bilingual lexicon extraction \cite{Dinu2014,Dinu2015,Mikolov2013},
and image labeling \cite{Akata2014,Frome2013,Norouzi2014,Palatucci2009,Socher2013},
to name a few.
Cross-lingual information retrieval \cite{Vinokourov2002} can also be viewed as
a ZSL task.

ZSL can be regarded as a type of (multi-class) classification problem,
in the sense that
the classifier is given a set of known 
example-class label pairs (training set),
with the goal to predict the unknown labels of new examples
(test set).
However, ZSL differs from the standard classification
in that %
the labels for the test examples are not present
in the training set. 
In standard settings,
the classifier chooses, for each test example,
a label among those observed in the training set,
but
this is not the case in ZSL.
Moreover,
the number of class labels can be huge in ZSL;
indeed,
in bilingual lexicon extraction,
labels correspond to possible translation words,
which can range over entire vocabulary of the target language.

Obviously, such a task would be intractable without further assumptions.
Labels are thus assumed to be embedded in a metric space (\emph{label space}),
and their distance (or similarity) can be measured in this space%
\footnote{
  Throughout the paper, we assume both the example and label spaces 
  are Euclidean.
}.
Such a label space can be built
with the help of background knowledge or external resources;
in image labeling tasks,
for example,
labels correspond to annotation keywords,
which can be readily represented as vectors in a Euclidean space,
either 
by using corpus statistics in a standard way,
or by using the more recent techniques for learning word representations,
such as
the continuous bag-of-words or skip-gram models \cite{Mikolov2013a}.

After a label space is established,
one natural approach would be to
use a regression technique on the training set
to obtain a mapping function from the example space to the label space.
This function could then be used for mapping unlabeled examples into the label space,
where nearest neighbor search is carried out to find the label closest to 
the mapped example.
Finally,
this label would be output as the prediction for the example.

To find the mapping function,
some researchers use the standard
linear ridge regression \cite{Dinu2014,Dinu2015,Mikolov2013,Palatucci2009},
whereas others use neural networks \cite{Frome2013,Norouzi2014,Socher2013}.

In the machine learning community, meanwhile,
the \emph{hubness phenomenon} \cite{Radovanovic2010} is
attracting attention
as a new type of the ``curse of dimensionality.''
This phenomenon is concerned
with nearest neighbor methods in high-dimensional space,
and states that
a small number of objects in the dataset, or \emph{hubs}, may occur
as the nearest neighbor of many objects.
The emergence of these hubs will diminish the utility of nearest neighbor search,
because the list of nearest neighbors often contain
the same hub objects 
regardless of the query object for which the list is computed.

\subsection{Research Objective and Contributions}
\label{sec:research-objective}

In this paper,
we show the interaction between
the regression step in ZSL and the subsequent nearest neighbor step
has a non-negligible effect on the prediction accuracy.

In ZSL,
examples and labels are represented as vectors in high-dimensional space,
of which the dimensionality is typically a few hundred.
As demonstrated by Dinu and Baroni \cite{Dinu2015} (see also Sect.~\ref{sec:experiment}),
when ZSL is formulated
as a problem of ridge regression from examples to labels,
``hub'' labels emerge, which
are simultaneously the nearest neighbors of many mapped examples.
This has the consequence of incurring bias in the prediction,
as these labels are output as the predicted labels for these examples.
The presence of hubs are not necessarily disadvantageous in standard classification settings;
there may be ``good'' hubs as well as ``bad'' hubs \cite{Radovanovic2010}.
However, in typical ZSL tasks in which the label set is fine-grained and huge,
hubs are nearly always harmful to the prediction accuracy.

Therefore,
the objective of this study is to investigate ways
to suppress hubs, and to improve the ZSL accuracy.
Our contributions are as follows.
\begin{enumerate}  
  \item 
    We analyze the mechanism behind
    the emergence of hubs in ZSL,
    both with ridge regression
    and ordinary least squares. 
    It is established that
    hubness occurs in ZSL not only because
    of high-dimensional space,
    but also because 
    ridge regression has conventionally been used in ZSL in a way that \emph{promotes} hubness.
    To be precise,
    the distributions of the mapped examples and the labels are different 
    such that hubs are likely to emerge.
  \item 
    Drawing on the above analysis,
    we propose using ridge regression to 
    map labels into
    the space of examples.
    This approach is contrary to that followed in
    existing work on ZSL,
    in which
    examples are mapped into label space.
    Our proposal is therefore to reverse the mapping direction.

    As shown in Sect.~\ref{sec:experiment},
    our proposed approach outperformed the existing approach
    in an empirical evaluation using both synthetic and real data.

  \item 
    In terms of contributions to the research on hubness,
    this paper is the first to provide in-depth analysis of the situation
    in which 
    the query and data follow different distributions,
    and to show that
    the variance of data matters to hubness.
    In particular,
    in Sect.~\ref{sec:hubness},
    we provide a proposition in which
    the degree of bias present in the data,
    which causes hub formation,
    is expressed as a function of the data variance.
    In Sect~\ref{sec:regression-hubness},
    this proposition serves as the main tool for analyzing hubness in ZSL.

\end{enumerate}

\section{Zero-Shot Learning as a Regression Problem}
\label{sec:zsl}

Let $X$ be a set of examples,
and $Y$ be a set of class labels.
In ZSL, 
not only examples %
but also labels %
are assumed to be vectors.
For this reason,
examples are sometimes referred to as \emph{source objects},
and labels as \emph{target objects}.
In the subsequent sections of this paper,
we mostly follow this terminology %
when referring to the members of $X$ and $Y$.

Let
$X\subset \Rset^c$ and $Y\subset \Rset^d$.
These spaces, $\Rset^c$ and $\Rset^d$,
are called \emph{source space} and \emph{target space}, respectively.
Although $X$ can be the entire space $\Rset^c$,
$Y$ is usually a finite set of points in $\Rset^d$,
even though its size may be enormous in some problems.

Let 
$X_{\text{train}} = \{ \mathbf{x}_i \mid i = 1, \ldots, n \}$ be the training examples (training source objects),
and
$Y_{\text{train}} = \{ \mathbf{y}_i \mid i = 1, \ldots, n \}$ be their labels (training target objects);
i.e., the class label of example $\mathbf{x}_i$ is $\mathbf{y}_i$, for each $i = 1, \ldots, n$.
In a standard classification setting,
the labels
in the training set are equal to the entire set of labels; i.e., $Y_{\text{train}} = Y$.
In contrast, this assumption is not made in ZSL, and
$Y_{\text{train}}$ is a strict subset of $Y$.
Moreover,
it is assumed that 
the true class labels of
test examples do not belong to $Y_{\text{train}}$;
i.e., they belong to $Y \backslash Y_{\text{train}}$.

In such a situation,
it is difficult to
find a function $f$ that maps $\mathbf{x} \in X$ directly to a label in $Y$.
Therefore,
a popular (and also natural) approach 
is to learn a projection $m: \Rset^c \to \Rset^d$,
which can be done with a regression technique.
With a projection function $m$ at hand,
the label of a new source object $\mathbf{x} \in \Rset^c$ is predicted to be the one 
closest to the mapped point $m(\mathbf{x})$ in the target space.
The prediction function $f$ is thus given by
\begin{equation*}
  f(\mathbf{x}) = \mathop{\arg\min}_{\mathbf{y}\in Y} \| m(\mathbf{x}) - \mathbf{y} \|.
\end{equation*}
After a source object $\mathbf{x}$ is projected to $m(\mathbf{x})$,
the task is reduced to that of nearest neighbor search in the target space.

\section{Hubness Phenomenon and the Variance of Data}
\label{sec:hubness}

The utility of nearest neighbor search would be significantly reduced
if the same objects were to appear consistently as the search result,
irrespective of the query.
Radovanovi\'c et~al. \cite{Radovanovic2010}
showed that such objects, termed \emph{hubs}, indeed occur
in high-dimensional space.
Although this phenomenon may seem counter-intuitive, 
hubness is observed in a variety of real datasets
and distance/similarity measures used in combination \cite{Radovanovic2010,Schnitzer2012,Suzuki2013}.

The aim of this study is to
analyze the hubness phenomenon in ZSL,
which involves nearest neighbor search
in high-dimensional space as the last step.
However, as a tool for analyzing ZSL,
the existing theory on hubness \cite{Radovanovic2010}
is inadequate,
as it was mainly developed for comparing the emergence of hubness
in spaces of different dimensionalities.

In the analysis of ZSL %
in Sect.~\ref{sec:two-eggs},
we aim to compare two distributions
in the same space,
but which differ in terms of \emph{variance}.
To this end,
we first present a proposition below, 
which is similar in spirit to
the main theorem of Radovanovi\'c et~al. \cite[Theorem~1]{Radovanovic2010},
but which distinguishes the query and data distributions,
and
also expresses the expected difference between the squared distances
from queries to database objects in terms of their variance.

The proposition is concerned with nearest neighbor search,
in which
$\mathbf{x}$ is a query,
and
$\mathbf{y}_1$ and $\mathbf{y}_2$ are two objects in a dataset.
In the context of ZSL as formulated in Sect.~\ref{sec:zsl},
$\mathbf{x}$ represents the image
of a source object in the target space (through the learned regression function $m$),
and $\mathbf{y}_1$ and $\mathbf{y}_2$ are target objects (labels)
lying at different distances from the origin.
We are interested in which of $\mathbf{y}_1$ and $\mathbf{y}_2$
are more likely to be closer to $\mathbf{x}$,
when $\mathbf{x}$ is sampled from a distribution $\mathcal{X}$ with zero mean.

Let $\E[\cdot]$ and $\Var[\cdot]$ denote the expectation and variance, respectively,
and let $\mathcal{N}(\boldsymbol{\mu}, \boldsymbol{\Sigma})$ be a multivariate normal distribution
with mean $\boldsymbol{\mu}$ and covariance matrix $\boldsymbol{\Sigma}$.

\begin{proposition}
  \label{prop:radovanovic}
  Let $\mathbf{y} = [y_1, \ldots, y_d]^{\text{\upshape T}}$ be a $d$-dimensional random vector,
  with components $y_i$ ($i=1,\ldots,d$) sampled i.i.d. 
  from a normal distribution with zero mean and
  variance $s^2$;
  i.e.,
  $\mathbf{y} \sim \mathcal{Y}$, where $\mathcal{Y} = \mathcal{N}(\mathbf{0}, s^2\mathbf{I})$.
  Further let
  $\sigma = \sqrt{\Var_{\mathcal{Y}}[\|\mathbf{y}\|^2]}$
  be the standard deviation of the squared norm $\|\mathbf{y}\|^2$.

  Consider two fixed samples $\mathbf{y}_1$ and $\mathbf{y}_2$ of random vector $\mathbf{y}$,
  such that the squared norms of $\mathbf{y}_1$ and $\mathbf{y}_2$ are
  $\gamma \sigma$ apart.
  In other words,
  \begin{equation*}
    \|\mathbf{y}_2 \|^2 - \|\mathbf{y}_1 \|^2 = \gamma \sigma.
  \end{equation*}
  Let
  $\mathbf{x}$ be a point sampled from 
  a distribution $\mathcal{X}$ with zero mean.
  Then,
  the expected difference $\Delta$ between the squared distances from $\mathbf{y}_1$ and $\mathbf{y}_2$
  to $\mathbf{x}$, i.e.,
  \begin{equation}
    \Delta 
    =
    \E_{\mathcal{X}} \left[ \| \mathbf{x} - \mathbf{y}_2 \|^2 \right]
    -
    \E_{\mathcal{X}} \left[ \| \mathbf{x} - \mathbf{y}_1 \|^2 \right]
    \label{eq:delta-definition}
  \end{equation}
  is given by
  \begin{equation}
    \Delta = \sqrt{2} \gamma d^{1/2} s^2.
    \label{eq:delta}
  \end{equation}
\end{proposition}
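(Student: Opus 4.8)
The plan is to split the argument into two independent computations: first reduce $\Delta$ to the difference of squared norms $\|\mathbf{y}_2\|^2 - \|\mathbf{y}_1\|^2$, and then evaluate $\sigma$ explicitly.

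First I would expand each squared distance as
\begin{equation*}
  \|\mathbf{x} - \mathbf{y}_j\|^2 = \|\mathbf{x}\|^2 - 2\,\mathbf{x}^{\text{T}}\mathbf{y}_j + \|\mathbf{y}_j\|^2, \qquad j = 1,2,
\end{equation*}
and take the expectation over $\mathbf{x} \sim \mathcal{X}$. Since $\mathcal{X}$ has zero mean, $\E_{\mathcal{X}}[\mathbf{x}^{\text{T}}\mathbf{y}_j] = \E_{\mathcal{X}}[\mathbf{x}]^{\text{T}}\mathbf{y}_j = 0$ for the fixed vectors $\mathbf{y}_j$, so each cross term vanishes. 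The term $\E_{\mathcal{X}}[\|\mathbf{x}\|^2]$ is common to both $j=1$ and $j=2$ and therefore cancels in the difference. This leaves $\Delta = \|\mathbf{y}_2\|^2 - \|\mathbf{y}_1\|^2$, which by the hypothesis equals $\gamma\sigma$. This step is where the zero-mean assumption on $\mathcal{X}$ does all the work; notably, no further structure of $\mathcal{X}$ (not even its variance) enters.

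It then remains to compute $\sigma = \sqrt{\Var_{\mathcal{Y}}[\|\mathbf{y}\|^2]}$. Writing $\|\mathbf{y}\|^2 = \sum_{i=1}^d y_i^2$ with the $y_i$ independent and each $y_i \sim \mathcal{N}(0, s^2)$, I would use independence to split the variance of the sum into a sum of variances, giving $\Var_{\mathcal{Y}}[\|\mathbf{y}\|^2] = \sum_{i=1}^d \Var[y_i^2] = d\,\Var[y_1^2]$. The one genuine calculation is $\Var[y_1^2]$: using the fourth moment of a centered normal, $\E[y_1^4] = 3s^4$, together with $\E[y_1^2] = s^2$, yields $\Var[y_1^2] = 3s^4 - s^4 = 2s^4$. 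Hence $\Var_{\mathcal{Y}}[\|\mathbf{y}\|^2] = 2d\,s^4$ and $\sigma = \sqrt{2d}\,s^2 = \sqrt{2}\,d^{1/2}s^2$.

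Combining the two parts gives $\Delta = \gamma\sigma = \sqrt{2}\,\gamma\,d^{1/2}s^2$, which is exactly \eqref{eq:delta}. The only nontrivial ingredient is the fourth-moment identity $\E[y_1^4] = 3s^4$; everything else is bookkeeping. I therefore do not anticipate a serious obstacle here, and the main thing to state carefully is the cancellation in the first step, so that it is clear $\Delta$ depends on $\mathcal{X}$ only through its being centered.
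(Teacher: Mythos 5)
Your proposal is correct and follows essentially the same route as the paper: expand the squared distances, use the zero-mean assumption on $\mathcal{X}$ to cancel the cross terms and the common $\E_{\mathcal{X}}[\|\mathbf{x}\|^2]$, then evaluate $\sigma^2 = 2ds^4$. The only cosmetic difference is that you compute $\Var[y_1^2] = 2s^4$ directly from the Gaussian fourth moment, whereas the paper obtains the same quantity by citing the variance $2d$ of a chi-squared variable with $d$ degrees of freedom.
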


\begin{proof}
  For $i=1,2$, the distance between a point $\mathbf{x}$ and $\mathbf{y}_i$ is given by
  \begin{equation*}
    \| \mathbf{x} - \mathbf{y}_i \|^2 
    = \| \mathbf{x} \|^2  + \| \mathbf{y}_i \|^2  - 2 \mathbf{x}^{\text{T}} \mathbf{y}_i,
  \end{equation*}
  and
  its expected value is
  \begin{align*}
    \E_{\mathcal{X}} \left[ \| \mathbf{x} - \mathbf{y}_i \|^2 \right]
    & = \E_{\mathcal{X}} \left[ \| \mathbf{x} \|^2 \right] + \| \mathbf{y}_i \|^2 - 2 \E_{\mathcal{X}} \left[ \mathbf{x} \right]^{\text{T}} \mathbf{y}_i  
      = \E_{\mathcal{X}} \left[ \| \mathbf{x} \|^2 \right] + \| \mathbf{y}_i \|^2 ,
  \end{align*}
  since $\E_{\mathcal{X}}\left[\mathbf{x}\right] = 0$ by assumption.
  Substituting this equality in \eqref{eq:delta-definition}
  yields
  \begin{align}
    \Delta 
       & = 
         \overbrace{
         \left(
         \E_{\mathcal{X}} \left[ \| \mathbf{x} \|^2 \right] + \| \mathbf{y}_2 \|^2  
         \right) 
         }^{ \E_{\mathcal{X}} \left[ \| \mathbf{x} - \mathbf{y}_2 \|^2 \right] }
         - 
         \overbrace{
         \left(
         \E_{\mathcal{X}} [ \| \mathbf{x} \|^2 ] + \| \mathbf{y}_1 \|^2
         \right)
         }^{ \E_{\mathcal{X}} \left[ \| \mathbf{x} - \mathbf{y}_1 \|^2 \right] }
         =
         \| \mathbf{y}_2 \|^2 - \| \mathbf{y}_1 \|^2
         =
         \gamma \sigma. %
         \label{eq:diff-variance}
  \end{align}

  Now, it is well known that if
  a $d$-dimensional random vector
  $\mathbf{z}$
  follows the multivariate standard normal distribution $\mathcal{N} (\mathbf{0}, \mathbf{I})$,
  then 
  its squared norm $\|\mathbf{z}\|^2$
  follows the chi-squared distribution with $d$ degrees of freedom,
  and its variance is $2d$.
  Since $\mathbf{y} = s \mathbf{z} $, %
  the variance $\sigma^2$ of the squared norm $\| \mathbf{y} \|^2$ is
  \begin{equation}
    \sigma^2
    = \Var_{\mathcal{Y}} \left[ \| \mathbf{y} \|^2 \right]
    = \Var_{\mathcal{Z}} \left[ s^2 \| \mathbf{z} \|^2 \right]
    = s^4 \Var_{\mathcal{Z}} \left[ \| \mathbf{z} \|^2 \right]
    = 2d s^4 .
    \label{eq:norm-variance}
  \end{equation}
  From \eqref{eq:diff-variance} and \eqref{eq:norm-variance},
  we obtain
  $\Delta = \gamma s^2 \sqrt{2d}$.
  \qed
\end{proof}

Note that in Proposition~\ref{prop:radovanovic},
the standard deviation $\sigma$ is used as a yardstick of measurement
to allow for comparison of ``similarly'' located object pairs across different distributions;
two object pairs in different distributions are regarded as similar
if objects in each pair are $\gamma\sigma$ apart as measured by the
$\sigma$ for the respective distributions, but has an equal factor $\gamma$.
This technique is due to
Radovanovi\'c et~al. \cite{Radovanovic2010}.

Now, $\Delta$
represents
the expected difference between the squared distances from
$\mathbf{x}$ to $\mathbf{y_1}$ and $\mathbf{y_2}$.
Equation~\eqref{eq:delta}
shows that $\Delta$ increases with $\gamma$,
the factor quantifying the amount of difference $\| \mathbf{y}_2 \|^2 - \| \mathbf{y}_1 \|^2 $.
This suggests that
a query object sampled from $\mathcal{X}$
is more likely to be closer to
object $\mathbf{y}_1$ than to $\mathbf{y}_2$,
if $\|\mathbf{y}_1\|^2 < \| \mathbf{y}_2 \|^2$;
i.e.,
$\mathbf{y}_1$ is closer to the origin than $\mathbf{y}_2$ is.
Because this holds for any pair of objects $\mathbf{y}_1$ and $\mathbf{y}_2$ in the dataset,
we can conclude that the objects closest to the origin in the dataset
tend to be hubs.

Equation~\eqref{eq:delta}
also states the relationship
between $\Delta$ and 
the component variance $s^2$ of distribution $\mathcal{Y}$,
by which the following is implied:
For a fixed query distribution $\mathcal{X}$,
if we have 
two distributions for $\mathbf{y}$, %
$\mathcal{Y}_1 = \mathcal{N}(\mathbf{0}, s_1^2\mathbf{I})$ and
$\mathcal{Y}_2 = \mathcal{N}(\mathbf{0}, s_2^2\mathbf{I})$
with $s_1^2 < s_2^2$,
it is preferable to choose $\mathcal{Y}_1$,
i.e., the distribution with a smaller $s^2$, 
when attempting to reduce hubness.
Indeed, 
assuming the independence of $\mathcal{X}$ and $\mathcal{Y}$,
we can show that
the influence of $\Delta$
relative to the expected squared distance from $\mathbf{x}$
to $\mathbf{y}$
(which is also subject to whether $\mathbf{y} \sim \mathcal{Y}_1$ or $\mathcal{Y}_2$),
is weaker for $\mathcal{Y}_1$ than for $\mathcal{Y}_2$, i.e.,
\begin{equation*}
  \frac{
    \Delta(\gamma, d, s_1)
  }{
    \E_{\mathcal{X}\mathcal{Y}_1}[\| \mathbf{x} - \mathbf{y}\|^2]
  }
  <
  \frac{
    \Delta(\gamma, d, s_2)
  }{
    \E_{\mathcal{X}\mathcal{Y}_2}[\| \mathbf{x} - \mathbf{y}\|^2]
  },
\end{equation*}
where we wrote $\Delta$ explicitly as a function of  $\gamma$, $d$, and $s$.

\section{Hubness in Regression-Based Zero-Shot Learning}
\label{sec:regression-hubness}

In this section, we analyze the emergence of hubs in 
the nearest neighbor step of ZSL.
Through the analysis,
it is shown that
hubs are promoted by the use of ridge regression
in the existing formulation of ZSL,
i.e., mapping source objects (examples) into the target (label) space.

As a solution,
we propose using ridge regression in a direction opposite to that in
existing work. That is,
we project target objects in the space of source objects,
and carry out nearest neighbor search in the source space.
Our argument for this approach consists of three steps. 

\begin{enumerate}
\item 
  We first show in Sect.~\ref{sec:shrinkage} that,
  with ridge regression (and ordinary least squares as well),
  mapped observation data tend to lie closer to the origin than the target responses do.
  Because the existing work formulates ZSL
  as a regression problem that projects source objects into the target space,
  this means that
  the norm of the projected source objects tends to be smaller than
  that of target objects.

\item 
  By combining the above result with
  the discussion of %
  Sect.~\ref{sec:hubness},
  we then argue that
  placing source objects closer to the origin
  is not ideal from the perspective of reducing hubness.
  On the contrary,
  placing target objects closer to the origin,
  as attained with the proposed approach,
  is more desirable (Sect.~\ref{sec:two-eggs}).

\item
  In Sect.~\ref{sec:nn-balls}, 
  we present a simple additional argument against
  placing source objects closer %
  to the origin;
  if the data is unimodal,
  such a configuration increases the possibility of another target object falling closer
  to the source object.
  This argument diverges from the discussion on hubness, %
  but again justifies
  the proposed approach.
\end{enumerate}

\subsection{Shrinkage of Projected Objects}
\label{sec:shrinkage}

We first prove that ridge regression
tends to map observation data closer to the origin of the space.
This tendency may be easily observed in ridge regression,
for which the penalty term shrinks 
the estimated coefficients towards zero.
However, the above tendency is also inherent in ordinary least squares.

Let $\| \cdot \|_{\text{F}}$ and $\| \cdot \|_2$
respectively
denote the Frobenius norm and the 2-norm of matrices.

\begin{proposition}
  \label{prop:shrinkage}
  Let $\mathbf{M} \in \Rset^{d \times c}$ be the solution
  for ridge regression with
  an observation matrix $\mathbf{A} \in \Rset^{c \times n}$ and
  a response matrix $\mathbf{B} \in \Rset^{d\times n}$; i.e.,
  \begin{equation}
    \label{eq:ridge-regression-objective}
    \mathbf{M} =
    \mathop{\arg\min}_{\mathbf{M}}
    \left(
      \|\mathbf{M}\mathbf{A} - \mathbf{B}\|^2_{\text{\upshape{F}}}
      +
      \lambda \| \mathbf{M} \|_{\text{\upshape{F}}}
    \right).
  \end{equation}
  where $\lambda \ge 0$ is a hyperparameter.
  Then, we have $\| \mathbf{M} \mathbf{A} \|_2 \le \| \mathbf{B} \|_2$.
\end{proposition}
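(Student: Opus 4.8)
The plan is to exploit the closed-form solution of ridge regression together with the submultiplicativity of the spectral norm; throughout I read the penalty in \eqref{eq:ridge-regression-objective} as the standard squared term $\lambda\|\mathbf{M}\|_{\text{F}}^2$, for which the minimizer is available in closed form. First I would set the gradient of the objective with respect to $\mathbf{M}$ to zero. The resulting normal equations are $\mathbf{M}(\mathbf{A}\mathbf{A}\transpose + \lambda\mathbf{I}) = \mathbf{B}\mathbf{A}\transpose$, so that
\begin{equation*}
  \mathbf{M} = \mathbf{B}\mathbf{A}\transpose(\mathbf{A}\mathbf{A}\transpose + \lambda\mathbf{I})^{-1},
\end{equation*}
where the inverse exists for every $\lambda > 0$ (and the $\lambda = 0$ case is read on the row space of $\mathbf{A}$). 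Substituting this back, the projected data become $\mathbf{M}\mathbf{A} = \mathbf{B}\mathbf{P}$, where I define the symmetric matrix $\mathbf{P} = \mathbf{A}\transpose(\mathbf{A}\mathbf{A}\transpose + \lambda\mathbf{I})^{-1}\mathbf{A} \in \Rset^{n\times n}$. The entire statement then reduces to the single claim that $\mathbf{P}$ is a contraction, i.e. $\|\mathbf{P}\|_2 \le 1$.

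The key step is to bound $\|\mathbf{P}\|_2$ through the singular value decomposition $\mathbf{A} = \mathbf{U}\boldsymbol{\Sigma}\mathbf{V}\transpose$. Substituting and cancelling the orthogonal factors via $\mathbf{U}\transpose\mathbf{U} = \mathbf{I}$, one obtains $\mathbf{P} = \mathbf{V}\mathbf{D}\mathbf{V}\transpose$, where $\mathbf{D}$ is diagonal whose nonzero entries have the form $\sigma_i^2/(\sigma_i^2 + \lambda)$, with $\sigma_i$ the singular values of $\mathbf{A}$. Since $\lambda \ge 0$, every such entry lies in $[0,1]$, so $\mathbf{P}$ is symmetric positive semidefinite with all eigenvalues at most $1$; hence $\|\mathbf{P}\|_2 \le 1$. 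When $\lambda = 0$, the matrix $\mathbf{P}$ is exactly the orthogonal projection onto the row space of $\mathbf{A}$, which also has unit spectral norm, so the ordinary least squares case is handled by the identical argument.

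Finally, submultiplicativity of the spectral norm gives
\begin{equation*}
  \|\mathbf{M}\mathbf{A}\|_2 = \|\mathbf{B}\mathbf{P}\|_2 \le \|\mathbf{B}\|_2\,\|\mathbf{P}\|_2 \le \|\mathbf{B}\|_2,
\end{equation*}
which is the desired inequality. I do not anticipate any real obstacle: the lone point demanding care is diagonalizing $\mathbf{P}$ correctly through the SVD and verifying that its eigenvalues never exceed $1$. Once the contraction property $\|\mathbf{P}\|_2 \le 1$ is established, the conclusion follows immediately, and the same computation transparently covers both the ridge ($\lambda > 0$) and ordinary least squares ($\lambda = 0$) regimes mentioned in the surrounding discussion.
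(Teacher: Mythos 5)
Your proposal is correct and follows essentially the same route as the paper's own sketch: the closed form $\mathbf{M} = \mathbf{B}\mathbf{A}\transpose(\mathbf{A}\mathbf{A}\transpose + \lambda\mathbf{I})^{-1}$, submultiplicativity of the spectral norm, and the bound $\|\mathbf{A}\transpose(\mathbf{A}\mathbf{A}\transpose + \lambda\mathbf{I})^{-1}\mathbf{A}\|_2 = \sigma^2/(\sigma^2+\lambda) \le 1$. You merely make explicit the SVD computation that the paper leaves as ``it can be shown,'' and you correctly read the penalty as the squared Frobenius norm (the paper's unsquared $\lambda\|\mathbf{M}\|_{\text{F}}$ is evidently a typo, since its stated closed form is the one for the squared penalty).
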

\begin{proof}[Sketch]
  It is well known that
  $\mathbf{M} = \mathbf{B} \mathbf{A}\transpose \left( \mathbf{A}\mathbf{A}\transpose + \lambda \mathbf{I} \right)^{-1}\!$.
  Thus we have
  \begin{equation}
    \label{eq:2-norm-submultiplicativity}
    \| \mathbf{M}\mathbf{A} \|_2 
    = \| \mathbf{B} \mathbf{A}\transpose \left( \mathbf{A}\mathbf{A}\transpose + \lambda \mathbf{I} \right)^{-1} \!\! \mathbf{A} \|_2
    \leq \| \mathbf{B} \|_2 \; \| \mathbf{A}\transpose \left( \mathbf{A}\mathbf{A}\transpose + \lambda \mathbf{I} \right)^{-1} \!\! \mathbf{A} \|_2 .
  \end{equation}
  Let $\sigma$ be the largest singular value of $\mathbf{A}$.
  It can be shown that
  \begin{equation*}
    \| \mathbf{A}\transpose \left( \mathbf{A}\mathbf{A}\transpose + \lambda \mathbf{I} \right)^{-1} \!\! \mathbf{A} \|_2
    = \frac{ \sigma^2 }{ \sigma^2 + \lambda } \le 1.
  \end{equation*}
  Substituting this inequality in \eqref{eq:2-norm-submultiplicativity} establishes the proposition.
  \qed
\end{proof}

Recall that if the data is centered,
the matrix 2-norm can be interpreted as an indicator of the variance of data along its principal axis.
Proposition~\ref{prop:shrinkage} thus indicates that
the variance along the principal axis of the mapped observations $\mathbf{M}\mathbf{A}$
tends to be smaller than that of responses $\mathbf{B}$.

Furthermore,
this tendency
even persists
in the ordinary least squares with no penalty term (i.e., $\lambda = 0$),
since $ \| \mathbf{M}\mathbf{A} \|_2 \le  \| \mathbf{B} \|_2 $ still holds in this case;
note that 
$\mathbf{A}\transpose \left( \mathbf{A}\mathbf{A}\transpose \right)^{-1} \!\! \mathbf{A}$
is an orthogonal projection and its 2-norm is $1$, but the inequality in
\eqref{eq:2-norm-submultiplicativity} holds regardless.
This tendency therefore cannot be completely eliminated
by simply decreasing the ridge parameter $\lambda$ towards zero.

In existing work on ZSL,
$\mathbf{A}$ represents the (training) source objects $\mathbf{X} = [\mathbf{x}_1 \cdots \mathbf{x}_n] \in \Rset^{c\times n}$,
to be mapped into the space of target objects (by projection matrix $\mathbf{M}$);
and $\mathbf{B}$ is the matrix of labels for 
the training objects, i.e., $\mathbf{B} = \mathbf{Y} = [\mathbf{y}_1 \cdots \mathbf{y}_n]\in \Rset^{d\times n}$.
Although Proposition~\ref{prop:shrinkage} is thus only concerned with the training set,
it suggests
that the source objects at the time of testing, which are not in $\mathbf{X}$,
are also likely to be mapped closer to the origin of the target space
than many of the target objects in $\mathbf{Y}$.

\subsection{Influence of Shrinkage on Nearest Neighbor Search}
\label{sec:two-eggs}

We learned in Sect.~\ref{sec:shrinkage} that
ridge regression (and ordinary least squares) shrink
the mapped observation data towards the origin of the space,
relative to the response.
Thus,
in existing work on ZSL in which source objects $X$ are projected to the space of target objects $Y$,
the norm of the mapped source objects 
is likely to be smaller than that of the target objects.

The proposed approach, 
which was described in the beginning of Sect.~\ref{sec:regression-hubness},
follows the opposite direction:
target objects $Y$ are projected to the space of source objects $X$.
Thus, in this case, %
the norm of the mapped target objects 
is expected to be smaller than that of the source objects.

The question now is which of
these configurations is preferable for the subsequent nearest neighbor step,
and we provide an answer under the following %
assumptions:
(i) The source space and the target space are of equal dimensions;
(ii) the source and target objects are isotropically normally distributed and independent; and
(iii) the projected data is also isotropically normally distributed, except that
the variance has shrunk.

Let $\mathcal{D}_1 = \mathcal{N}(0, s_1^2 \mathbf{I})$ and
$\mathcal{D}_2 = \mathcal{N}(0, s_2^2 \mathbf{I})$ be
two multivariate normal distributions,
with $s_1^2 < s_2^2$. 
We compare two configurations of source object $\mathbf{x}$ and target objects $\mathbf{y}$:
(a)
the one in which
$\mathbf{x} \sim \mathcal{D}_1$ and
$\mathbf{y} \sim \mathcal{D}_2$,
and
(b)
the one in which
$\mathbf{x}' \sim \mathcal{D}_2$ and
$\mathbf{y}' \sim \mathcal{D}_1$
on the other hand; 
here,
the primes in (b) %
were added to distinguish %
variables in two configurations.

These two configurations are intended to model
situations in (a) existing work and (b) our proposal.
In configuration (a), 
$\mathbf{x}$ is shorter in expectation than $\mathbf{y}$,
and therefore
this approximates the situation that arises from existing work.
Configuration (b) represents the opposite situation,
and corresponds to our proposal
in which $\mathbf{y}$ is the projected vector
and thus is shorter in expectation than $\mathbf{x}$.

Now, we aim to verify whether the two configurations differ
in terms of the likeliness of hubs emerging,
using Proposition~\ref{prop:radovanovic}.
First, we scale the entire space of configuration (b) by $(s_1/s_2)$,
or equivalently,
we consider transformation of the variables by
$\mathbf{x}'' = (s_1/s_2) \mathbf{x}'$ and $\mathbf{y}'' = (s_1/s_2) \mathbf{y}'$.
Note that
because the two variables are scaled equally,
this change of variables preserves the nearest neighbor relations
among the samples.
See Fig.~\ref{fig:illustration-hub} for an illustration of the
relationship among
$\mathbf{x}$, $\mathbf{y}$, $\mathbf{x}'$, $\mathbf{y}'$, $\mathbf{x}''$, and $\mathbf{y}''$.

\begin{figure}[tb]
  \centering
  \includegraphics[width=\linewidth]{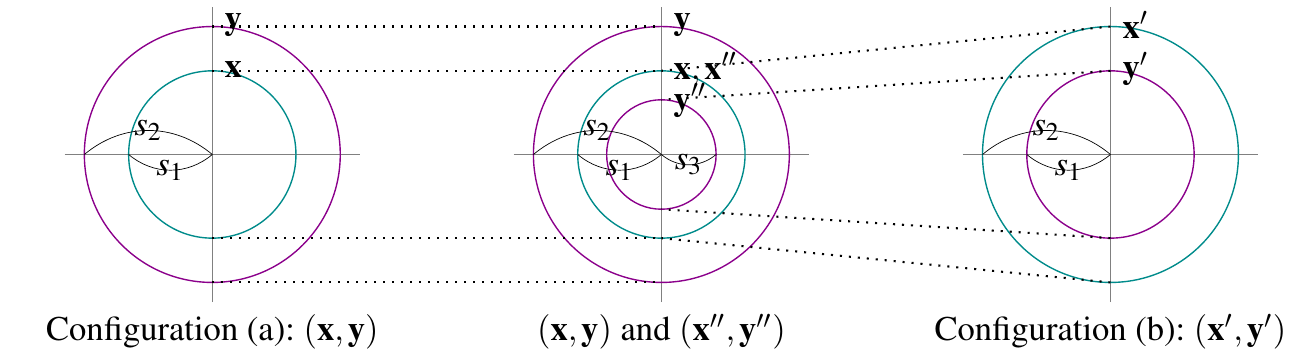}

  \caption{
    Schematic illustration  for Sect.~\ref{sec:two-eggs} in two-dimensional space.
    The left and the right panels depict configurations~(a) and (b), respectively,
    with the center panel showing both configuration (a) and the scaled version of configuration (b)
    in the same space.
    A circle represents a distribution,
    with its radius indicating the standard deviation.
    The radius of the circles for $\mathbf{x}$ (on the left panel) and $\mathbf{y}'$ (right panel) is $s_1$,
    whereas
    that of the circles for $\mathbf{y}$ (left panel) and $\mathbf{x}'$ (right panel) is $s_2$,
    with $s_1 < s_2$.
    Circles $\mathbf{x}''$ and $\mathbf{y}''$ are the scaled versions of 
    $\mathbf{x}'$ and $\mathbf{y}'$
    such that the standard deviation (radius) of $\mathbf{x}''$ is equal to $\mathbf{x}$,
    which makes the standard deviation of $\mathbf{y''}$ equal to $s_3 = s_1^2/s_2$.
  }
  \label{fig:illustration-hub}
\end{figure}

Let $\{x'_i\}$ and $\{y'_i\}$ be the components of $\mathbf{x}'$ and $\mathbf{y}'$, respectively,
and let $\{x''_i\}$ and $\{y''_i\}$ be those for  $\mathbf{x}''$ and $\mathbf{y}''$.
Then we have %
\begin{align*}
  \Var[x''_i] &
                       = \Var \left[ \frac{s_1}{s_2} x_i' \right]
                       = \left(\frac{s_1}{s_2}\right)^2 \Var[ x'_i ]
                       = s_1^2 ,
  \\
  \Var[y''_i] &
                       = \Var \left[ \frac{s_1}{s_2}y'_i \right]
                       = \left(\frac{s_1}{s_2}\right)^2 \Var [ y'_i ]
                       = \frac{s_1^4}{s_2^2} .
\end{align*}
Thus,
$\mathbf{x}''$
follows $\mathcal{N}(0, s_1^2\mathbf{I})$, and
$\mathbf{y}''$ follows $\mathcal{N}(0, (s_1^4/s_2^2)\mathbf{I})$.
Since both $\mathbf{x}$ in configuration (a) and $\mathbf{x}''$ above follow the same distribution,
it now becomes possible to compare the properties of $\mathbf{y}$ and $\mathbf{y}''$
in light of the discussion at the end of Sect.~\ref{sec:hubness}:
In order to reduce hubness,
the distribution with a smaller variance is preferred to the one with a larger variance,
for a fixed distribution of source $\mathbf{x}$ (or equivalently, $\mathbf{x}''$).

It follows that $\mathbf{y}''$ is preferable to
$\mathbf{y}$,
because the former has a smaller variance. %%
As mentioned above,
the nearest neighbor relation between
the scaled variables, $\mathbf{y}''$ against $\mathbf{x}''$ (or equivalently $\mathbf{x}$),
is identical to $\mathbf{y}'$ against $\mathbf{x}'$ in configuration~(b).
Therefore, we conclude that configuration~(b) is preferable to configuration~(a),
in the sense that the former is more likely to suppress hubs.

Finally, recall that the preferred configuration~(b) models
the situation of our proposed approach,
which is to map target objects in the space of source objects.

\subsection{Additional Argument for Placing Target Objects Closer to the Origin}
\label{sec:nn-balls}

By assuming a unimodal data distribution of which
the probability density function (pdf) $p(\mathbf{z})$ is decreasing in $\|\mathbf{z}\|$,
we are able to present
the following proposition
which also advocates placing the source objects outside the target objects,
and not the other way around.

Proposition~\ref{prop:nn-balls}
is concerned with
the placement of a source object $\mathbf{x}$ at a fixed distance $r$ from its target object $\mathbf{y}$,
for which
we have two alternatives $\mathbf{x}_1$ and $\mathbf{x}_2$, located at different distances from
the origin of the space. %

\begin{proposition}
  \label{prop:nn-balls}
  Consider a finite set $Y$ of objects (i.e., points) in a Euclidean space, sampled i.i.d. from
  a distribution whose pdf $p(\mathbf{z})$ is a decreasing function of $\|\mathbf{z}\|$.
  Let $\mathbf{y} \in Y$ be an object in the set, and
  let $r > 0$. %
  Further let $\mathbf{x}_1$ and $\mathbf{x}_2$ be two objects
  at a distance $r$ apart from $\mathbf{y}$.
  If $\| \mathbf{x}_1 \| < \| \mathbf{x}_2 \|$,
  then the probability that $\mathbf{y}$ is the closest object in $Y$ to $\mathbf{x}_2$
  is greater than that of $\mathbf{x}_1$.
\end{proposition}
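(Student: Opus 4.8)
The plan is to reduce the nearest-neighbor event to a statement about the probability mass of a ball, and then to establish a monotonicity lemma for that mass by a reflection argument.

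First I would reformulate the event. Because $\mathbf{y}$ lies at distance exactly $r$ from each $\mathbf{x}_i$, the point $\mathbf{y}$ is the closest object of $Y$ to $\mathbf{x}_i$ precisely when no other point of $Y$ lies strictly inside the ball $B(\mathbf{x}_i, r) = \{\mathbf{z} : \|\mathbf{z} - \mathbf{x}_i\| < r\}$. Since the members of $Y$ are i.i.d. with pdf $p$, fixing one of them to be $\mathbf{y}$ leaves the remaining $n-1$ points i.i.d. from $p$; hence the probability that $\mathbf{y}$ is the nearest neighbor of $\mathbf{x}_i$ equals $(1 - q_i)^{n-1}$, where $q_i = \int_{B(\mathbf{x}_i, r)} p(\mathbf{z})\, d\mathbf{z}$ is the mass of the ball (ties occur with probability zero). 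As $t \mapsto (1 - t)^{n-1}$ is decreasing on $[0, 1]$ for $n \ge 2$, the proposition reduces to showing $q_2 < q_1$: a ball of fixed radius captures strictly less mass when its center is farther from the origin.

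Next I would prove this monotonicity by reflection. By the radial symmetry of $p$, the mass $q_i$ depends only on $\|\mathbf{x}_i\|$, so after a rotation I may take the centers collinear with the origin, $\mathbf{x}_1 = (a, \mathbf{0})$ and $\mathbf{x}_2 = (b, \mathbf{0})$ with $0 \le a < b$. Let $\rho$ be the reflection across the perpendicular bisector hyperplane $H = \{z_1 = (a+b)/2\}$, so that $\rho$ maps $B_1 := B(\mathbf{x}_1, r)$ onto $B_2 := B(\mathbf{x}_2, r)$. Writing $q_1 - q_2 = \int_{B_1 \setminus B_2} p - \int_{B_2 \setminus B_1} p$ and substituting $\mathbf{z} = \rho(\mathbf{w})$ in the second integral gives
\begin{equation*}
  q_1 - q_2 = \int_{B_1 \setminus B_2} \bigl( p(\mathbf{w}) - p(\rho(\mathbf{w})) \bigr)\, d\mathbf{w}.
\end{equation*}

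The crux is the sign of the integrand. Every $\mathbf{w} \in B_1 \setminus B_2$ is closer to $\mathbf{x}_1$ than to $\mathbf{x}_2$ and therefore lies in the half-space $\{w_1 < (a+b)/2\}$; for such $\mathbf{w}$ a direct computation yields $\|\rho(\mathbf{w})\|^2 - \|\mathbf{w}\|^2 = (a+b)\bigl((a+b) - 2w_1\bigr) > 0$. Thus $\rho$ moves $\mathbf{w}$ strictly farther from the origin, and since $p$ is decreasing in $\|\cdot\|$ the integrand is nonnegative, giving $q_1 \ge q_2$; strict inequality follows from the strict monotonicity of $p$ together with the fact that $B_1 \setminus B_2$ has positive measure. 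I expect the main obstacle to be exactly this last step: one must handle the overlap $B_1 \cap B_2$ correctly — which is why I pass to the symmetric difference rather than integrating over the full balls — and confirm that reflection genuinely increases the distance to the origin on the relevant region; everything else is routine bookkeeping.
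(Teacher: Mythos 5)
Your proof is correct, and it follows the same overall reduction as the paper's sketch: condition on $\mathbf{y}$, note that $\mathbf{y}$ fails to be the nearest neighbor of $\mathbf{x}_i$ exactly when another sample lands in the ball $B(\mathbf{x}_i,r)$, and use the i.i.d.\ assumption to reduce everything to comparing the probability masses $q_1$ and $q_2$ of the two equal-radius balls. Where you go beyond the paper is the key inequality $q_2 \le q_1$: the paper simply asserts that it ``obviously holds'' because $p$ is radially decreasing and $\|\mathbf{x}_1\| \le \|\mathbf{x}_2\|$, whereas you actually prove it by reflecting across the perpendicular bisector of the segment joining the centers and comparing $p(\mathbf{w})$ with $p(\rho(\mathbf{w}))$ on the symmetric difference $B_1 \setminus B_2$. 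This is a genuine improvement in rigor, because the claim is not quite pointwise-obvious --- points of $B_2$ need not all be farther from the origin than points of $B_1$, so one really does need a measure-preserving pairing of the two regions, which is exactly what your reflection supplies (and your computation $\|\rho(\mathbf{w})\|^2 - \|\mathbf{w}\|^2 = (a+b)\bigl((a+b)-2w_1\bigr) > 0$ on the relevant half-space is right, since $a+b>0$ follows from $\|\mathbf{x}_1\|<\|\mathbf{x}_2\|$). You also correctly note that strictness of the proposition's conclusion requires $p$ strictly decreasing and $B_1 \setminus B_2$ of positive measure (and, implicitly, $|Y| \ge 2$), points the paper's sketch glosses over. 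In short: same route, but you fill in the step the paper waves away.
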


\begin{proof}[Sketch]
  For $i=1,2$,
  if another object in $Y$ appears within distance $r$ of $\mathbf{x}_i$,
  then $\mathbf{y}$ is not the nearest neighbor of $\mathbf{x}_i$.
  Thus, we aim to prove that
  this probability for $\mathbf{x}_2$ is smaller than that for $\mathbf{x}_1$.
  Since objects in $Y$ are sampled i.i.d, it suffices to prove
  \begin{equation}
    \label{eq:probabilities-of-balls}
    \int_{\mathbf{z} \in V_2} \!\! dp(\mathbf{z}) \,
    \le
    \int_{\mathbf{z} \in V_1} \!\! dp(\mathbf{z}),
  \end{equation}
  where
  $V_i$ ($i=1,2$) denote the balls centered at $\mathbf{x}_i$ with radius $r$.
  However,
  \eqref{eq:probabilities-of-balls} obviously holds
  because
  the balls $V_1$ and $V_2$ have the same radii,
  $p(\mathbf{z})$ is a decreasing function of $\|\mathbf{z}\|$,
  and
  $\| \mathbf{x}_1 \| \le \| \mathbf{x}_2 \|$.
  See Figure~\ref{fig:illustration} for an illustration with a bivariate standard normal distribution 
  in two-dimensional space.
  \qed  

\begin{figure}[tb]
  \centering
  \includegraphics[scale=0.95]{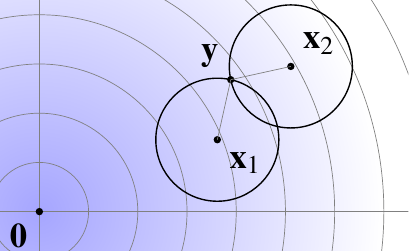}

  \caption{Illustration of the situation considered in Proposition~\ref{prop:nn-balls}.
    Here, it is assumed that
    $\|\mathbf{x}_1\| < \|\mathbf{x}_2\|$ and
    $\|\mathbf{y} - \mathbf{x}_1\| = \|\mathbf{y} - \mathbf{x}_2\|$.
    The intensity of the background shading represents the values of
    the pdf of a bivariate standard normal distribution,
    from which $\mathbf{y}$ and other objects (not depicted in the figure)
    in set $Y$ are sampled.
    The probability mass inside the circle centered at $\mathbf{x}_1$
    is greater than that centered at $\mathbf{x}_2$,
    as the intensity of the shading inside the two circles shows.}
  \label{fig:illustration}
\end{figure}

\end{proof}

In the context of existing work on ZSL,
which uses
ridge regression to map source objects in the space of target objects,
$\mathbf{x}$ can be regarded as a mapped source object,
and $\mathbf{y}$ as its target object.
Proposition~\ref{prop:nn-balls} implies that
if we want to make a source object $\mathbf{x}$ the nearest neighbor of a target object $\mathbf{y}$,
it should rather be placed farther than $\mathbf{y}$ from the origin,
but
this idea is not present in the objective function~\eqref{eq:ridge-regression-objective}
for ridge regression;
the first term of the objective allocates the same amount of penalty for $\mathbf{x}_1$ and $\mathbf{x}_2$,
as they are equally distant from the target $\mathbf{y}$.
On the contrary,
the ridge regression actually
\emph{promotes} placement of 
the mapped source object $\mathbf{x}$ closer to the origin,
as stated in Proposition~\ref{prop:shrinkage}.

\subsection{Summary of the Proposed Approach}
\label{sec:proposed-method}

Drawing on the analysis presented in Sections~\ref{sec:shrinkage}--\ref{sec:nn-balls},
we propose performing regression that maps \emph{target} objects in the
space of \emph{source} objects,
and carry out nearest neighbor search in the source space.
This opposes the approach followed in existing work on 
regression-based ZSL
\cite{Dinu2014,Dinu2015,Lazaridou2014,Mikolov2013,Palatucci2009},
which maps source objects into the space of target objects.

In the proposed approach,
matrix $\mathbf{B}$ in Proposition~\ref{prop:shrinkage}
represents the source objects $\mathbf{X}$, and
$\mathbf{A}$ represents the target objects $\mathbf{Y}$.
Therefore,
$ \| \mathbf{M}\mathbf{A} \|_2 \le  \| \mathbf{B} \|_2 $ means
$ \| \mathbf{M}\mathbf{Y} \|_2 \le  \| \mathbf{X} \|_2 $,
i.e., the mapped target objects tend to be placed closer
than
the corresponding source objects to the origin.

Admittedly,
the above argument for our proposal
relies on
strong assumptions on data distributions (such as normality),
which do not apply to real data.
However,
the effectiveness of our proposal
is verified empirically in Sect.~\ref{sec:experiment}
by using real data.

\section{Related Work}
\label{sec:related-work}

The first use of ridge regression in ZSL can be found in
the work of
Palatucci et~al. \cite{Palatucci2009}.
Ridge regression has since been one of the standard approaches to ZSL,
especially for natural language processing tasks: 
phrase generation \cite{Dinu2014} and bilingual lexicon extraction \cite{Dinu2014,Dinu2015,Mikolov2013}.
More recently,
neural networks have been used for learning
non-linear mapping %
\cite{Frome2013,Socher2013}.
All of the regression-based methods listed above,
including those based on neural networks,
map source objects 
into the target space.

ZSL can also be formulated as a problem of \emph{canonical correlation analysis} (CCA).
Hardoon et.~al. \cite{Hardoon2004} used CCA and kernelized CCA 
for image labeling.
Lazaridou et.~al. \cite{Lazaridou2014} compared 
ridge regression, CCA, singular value decomposition, and neural networks in image labeling.
In our experiments (Sect.~\ref{sec:experiment}), we use CCA as one of the baseline methods for comparison.

Dinu and Baroni \cite{Dinu2015} reported the hubness phenomenon in ZSL.
They proposed two reweighting techniques
to reduce hubness in ZSL, which are applicable to cosine similarity.
Toma\v{s}ev et~al. \cite{Tomasev2013}
proposed hubness-based instance weighting schemes for CCA.
These schemes were applied to classification problems
in which multiple instances (vectors) in the target space have the same class label.
This setting is different from the one assumed in this paper (see Sect.~\ref{sec:zsl}), i.e.,
we assume that a class label is represented by a single target vector.\footnote{
  Perhaps because of this difference,
  the method in \cite{Tomasev2013} did not perform well in our experiment,
  and we do not report its result in Sect.~\ref{sec:experiment}.}

\emph{Structured output learning} \cite{Bakir2007}
addresses a problem setting similar to ZSL,
except that the target objects typically have complex structure,
and thus the cost of embedding objects in a vector space is prohibitive.
\emph{Kernel dependency estimation} \cite{Weston2002}
is an approach that uses kernel PCA and regression to avoid this issue.
In this context,
nearest neighbor search in the target space reduces to
the \emph{pre-image} problem \cite{Mika1998} in the implicit space induced by kernels.

\section{Experiments}
\label{sec:experiment}

We evaluated the proposed approach with both synthetic and real datasets.
In particular, it was applied to two real ZSL tasks:
bilingual lexicon extraction and image labeling.

The main objective of the following experiments
is %
to verify whether our proposed approach
is capable of suppressing hub formation and outperforming
the existing approach,
as claimed in Sect.~\ref{sec:regression-hubness}.

\subsection{Experimental Setups}
\label{sec:experimental-setup}

\subsubsection{Compared Methods.}
\label{sec:compared-method}

The following methods were compared.
\begin{itemize}
 \item $\text{Ridge}_{\text{X} \to \text{Y}}$: 
 Linear ridge regression mapping source objects $X$ into the space of target objects $Y$.
   This is how ridge regression was used in the existing work on ZSL \cite{Dinu2014,Dinu2015,Lazaridou2014,Mikolov2013,Palatucci2009}.
 \item $\text{Ridge}_{\text{Y} \to \text{X}}$: 
 Linear ridge regression mapping target objects $Y$ into the source space.
   This is the proposed approach (Sect.~\ref{sec:proposed-method}).
 \item CCA: 
   Canonical correlation analysis (CCA) for ZSL \cite{Hardoon2004}.
   We used the code available from \url{http://www.davidroihardoon.com/Professional/Code.html}.
\end{itemize}

We calibrated the hyperparameters,
i.e., the regularization parameter in ridge regression and the dimensionality of common feature space in CCA,
by cross validation on the training set.

After ridge regression or CCA is applied, both X and Y (or their images) are located in the same space,
wherein we find the closest target object for a given source object as measured by the Euclidean distance. 
In addition to the Euclidean distance,
we also tested
the \emph{non-iterative contextual dissimilarity measure} (NICDM) \cite{Jegou2007}
in combination with $\text{Ridge}_{\text{X} \to \text{Y}}$ and CCA.
NICDM adjusts the Euclidean distance to make the neighborhood relations more symmetrical,
and is known to effectively reduce hubness in non-ZSL context \cite{Schnitzer2012}.

All data were centered before application of regression and CCA, as usual with these methods.

\subsubsection{Evaluation Criteria.}
\label{sec:evaluation}

The compared methods were evaluated in two respects:
(i) the correctness of their prediction,
and (ii) the degree of hubness in nearest neighbor search.

\paragraph{Measures of Prediction Correctness.}
In all our experiments, ZSL was formulated as
a ranking task;
given a source object, all the target objects were ranked by their likelihood for the source object.
As the main evaluation criterion,
we used the mean average precision (MAP) \cite{Manning2008}, 
which is one of the standard performance metrics for ranking methods.
Note that 
the synthetic and the image labeling experiments
are the single-label problems %
for which MAP is equal to the mean reciprocal rank \cite{Manning2008}.
We also report the top-$k$ accuracy\footnote{
  In image labeling (only),
  we report the top-1 accuracy ($\text{Acc}_1$) \emph{macro-averaged} over classes,
  to allow direct comparison with published results.
  Note also that $\text{Acc}_k$ with a larger $k$ would not be an informative metric
  for the image labeling task,
  which only has 10 test labels.
}
($\text{Acc}_k$) for $k=1$ and $10$,
which is
the percentage of source objects for which the correct target objects are present in their $k$ nearest neighbors.

\paragraph{Measure of Hubness.}
To measure the degree of hubness,
we used the \emph{skewness} of the (empirical) $N_{k}$ distribution,
following the approach in the literature \cite{Radovanovic2010,Schnitzer2012,Suzuki2013,Tomasev2013}.
The $N_{k}$ distribution is the distribution of
the number $N_k(i)$ of times
each target object $i$ is found in the top $k$ of the ranking
for source objects,
and its skewness is defined as follows:
\begin{equation*}
  \text{($N_{k}$ skewness)} = \frac{\mathbb \sum_{i=1}^{\ell} \left( N_k(i) - \E  \left[ N_k \right] \right)^3 / \ell }{ \Var \left[  N_k \right]^\frac{3}{2}}
\end{equation*}
where
$\ell$ is the total number of test objects in $Y$,
$N_k(i)$ is the number of times the $i$th target object
is in the top-$k$ closest target objects of the source objects.
A large $N_{k}$ skewness value indicates
the existence of target objects that frequently appear in the $k$-nearest neighbor lists of source objects;
i.e., the emergence of hubs.

\subsection{Task Descriptions and Datasets}
\label{sec:task}

We tested our method on the following ZSL tasks.

\subsubsection{Synthetic Task.}

To simulate a ZSL task,
we need to generate object pairs across two spaces
in a way that
the configuration of objects is to some extent preserved across the spaces,
but is not exactly identical.
To this end,
we first generated 3000-dimensional (column) vectors $\mathbf{z}_i \in \mathbb{R}^{3000}$ 
for $i = 1, \ldots, 10000$,
whose coordinates were generated from an i.i.d. univariate standard normal distribution.
Vectors $\mathbf{z}_i$ %
were treated
as \emph{latent} variables, in the sense that they were not directly observable,
but only their images $\mathbf{x}_i$ and $\mathbf{y}_i$ 
in two different features spaces were.
These images were obtained via different random projections, %
i.e., 
$\mathbf{x}_i = \mathbf{R}_X \mathbf{z}_i$ and
$\mathbf{y}_i = \mathbf{R}_Y \mathbf{z}_i$,
where %
$\mathbf{R}_X, \mathbf{R}_Y \in \mathbb{R}^{300 \times 3000}$
are random matrices
whose elements were sampled from the uniform distribution over $[-1,1]$.
Because random projections preserve the length and the angle of vectors in the original space
with high probability \cite{Bingham2001,Dasgupta2000},
the configuration of the projected objects is expected to be similar
(but different) across the two spaces. %

Finally,
we randomly divided object pairs $\{ (\mathbf{x}_i, \mathbf{y}_i) \}_{i=1}^{10000}$ into
the training set (8000 pairs) and the test set (remaining 2000 pairs).

\subsubsection{Bilingual Lexicon Extraction.}

Our first real ZSL task is bilingual lexicon extraction \cite{Dinu2014,Dinu2015,Mikolov2013},
formulated as a ranking task: 
Given a word in the source language, the goal is 
to rank its gold translation (the one listed in an existing bilingual lexicon 
as the translation of the source word) higher than other non-translation candidate words.

In this experiment,
we evaluated the performance in the tasks of finding the English translations of words in the
following source languages:
Czech (cs), German (de), French (fr), Russian (ru), Japanese (ja), and Hindi (hi).
Thus, in our setting, 
each of these six languages was used as $X$ alternately,
whereas English was the target language $Y$ throughout.\footnote{
  We also conducted experiments with
  English as $X$ and other languages as $Y$.
  The results are not presented here due to lack of space,
  but the same trend was observed.
}

Following related work \cite{Dinu2014,Dinu2015,Mikolov2013},
we trained
a 
CBOW %
model \cite{Mikolov2013a}
on the pre-processed Wikipedia corpus distributed by
the Polyglot project\footnote{\url{https://sites.google.com/site/rmyeid/projects/polyglot}}
(see \cite{Al-Rfou2013} for corpus statistics),
using the word2vec\footnote{\url{https://code.google.com/p/word2vec/}} tool.
The window size parameter of word2vec was set to 10, with the dimensionality of feature vectors set to 500.

To learn the projection function and measure the accuracy in the test set,
we used the bilingual dictionaries\footnote{\url{http://hlt.sztaki.hu/resources/dict/bylangpair/wiktionary_2013july/}} 
of \'Acs~et~al. \cite{Acs2013} as the gold translation pairs.
These gold pairs were randomly split into the training set (80\% of the whole pairs) and the test set (20\%).
We repeated experiments on four different random splits, for which we report the average performance.

\subsubsection{Image Labeling.}

The second real task is image labeling,
i.e., the task of finding a suitable word label for a given image.
Thus, source objects $X$ are the images and target objects $Y$ are the word labels.

We used the Animal with Attributes (AwA) dataset\footnote{
  \url{http://attributes.kyb.tuebingen.mpg.de/}
}, which consists of 30,475 images of 50 animal classes.
For image representation, 
we used the DeCAF features \cite{Donahue2013},
which are the 4096-dimensional vectors constructed with convolutional neural networks (CNNs).
DeCAF is also available from the AwA website.
To save computational cost,
we used random projection to reduce the dimensionality of DeCAF features to 500.

As with the bilingual lexicon extraction experiment,
label features (word representations) 
were constructed with word2vec,
but this time they were trained on the
English version of Wikipedia (as of March 4, 2015)
to cover all AwA labels.
Except for the corpus, we used the same word2vec parameters as with bilingual lexicon extraction.

We respected the standard zero-shot setup on AwA provided with the dataset;
i.e., the training set contained 40 labels, and test set contained the other 10 labels.

\subsection{Experimental Results}
\label{sec:results}

\captionsetup[subfloat]{position=top}

\begin{table*}[htb!]
  \smaller
  \centering
  \caption{
    Experimental results: MAP is the mean average precision. 
    $\text{Acc}_\text{k}$ is the accuracy of the $k$-nearest neighbor list.
    $N_k$ is the skewness of the $N_k$ distribution. 
    A high $N_k$ skewness indicates the emergence of hubs (smaller is better).
    The bold figure indicates the best performer in each evaluation criteria. 
  }
  \label{tab:result}

  \subfloat[
    Synthetic data.
  ]{
  \label{tab:synthetic-result}
  \begin{tabular}{l rrr rr}
    \toprule
    method                                                & \multicolumn{1}{c}{MAP}           & \multicolumn{1}{c}{$\text{Acc}_\text{1}$} & \multicolumn{1}{c}{$\text{Acc}_\text{10}$} & \multicolumn{1}{c}{$N_{1}$}       & \multicolumn{1}{c}{$N_{10}$}      \\
    \midrule
        $\text{Ridge}_{\text{X} \to \text{Y}}$            & 21.5          & 13.8                  & 36.3                   & 24.19         & 12.75         \\
        $\text{Ridge}_{\text{X} \to \text{Y}}$ + NICDM    & 58.2          & 47.6                  & 78.4                   & 13.71         & 7.94          \\
        $\text{Ridge}_{\text{Y} \to \text{X}}$ (proposed) & \textbf{91.7} & \textbf{87.6}         & \textbf{98.3}          & \textbf{0.46} & \textbf{1.18} \\
        CCA                                               & 78.9          & 71.6                  & 91.7                   & 12.0          & 7.56          \\
        CCA + NICDM                                       & 87.6          & 82.3                  & 96.5                   & 0.96          & 2.58          \\
    \bottomrule
  \end{tabular}
  }\hfill
  \subfloat[
    MAP on bilingual lexicon extraction.
  ]{
  \label{tab:ble-map}

  \begin{tabular}{l rrrrrr}
    \toprule
    method                                              & \multicolumn{1}{c}{cs}        & \multicolumn{1}{c}{de}        & \multicolumn{1}{c}{fr}         & \multicolumn{1}{c}{ru}            & \multicolumn{1}{c}{ja}            & \multicolumn{1}{c}{hi}            \\
    \midrule
    $\text{Ridge}_{\text{X} \to \text{Y}} $             & 1.7           & 1.0           & 0.7           & 0.5           & 0.9           & 5.3           \\ 
    $\text{Ridge}_{\text{X} \to \text{Y}}$ + NICDM      & 11.3          & 7.1           & 5.9           & 3.8           & 10.2          & 21.4          \\
    $\text{Ridge}_{\text{Y} \to \text{X}} $  (proposed) & \textbf{40.8} & \textbf{30.3} & \textbf{46.5} & \textbf{31.1} & \textbf{42.0} & \textbf{40.6} \\
    CCA                                                 & 24.0          & 18.1          & 33.7          & 21.2          & 27.3          & 11.8          \\
    CCA + NICDM                                         & 30.1          & 23.4          & 39.7          & 26.7          & 35.3          & 19.3          \\
    \bottomrule
  \end{tabular}
  }

  \subfloat[
    $\text{Acc}_k$ on bilingual lexicon extraction.
  ]{
  \label{tab:ble-acc}

  \begin{tabular}{l *{6}{@{\hspace{1.6em}} rr}}
    \toprule
                                                      & \multicolumn2c{cs}    & \multicolumn2c{de}     & \multicolumn2c{fr}    & \multicolumn2c{ru}
                                                      & \multicolumn2c{ja}    & \multicolumn2c{hi}                                                                                                                       \\
    \cmidrule(r{1em}){2-3} \cmidrule(r{1em}){4-5} \cmidrule(r{1em}){6-7} \cmidrule(r{1em}){8-9} \cmidrule(r{1em}){10-11} \cmidrule{12-13}
    method                                            & \multicolumn1c{$\text{Acc}_\text{1}$} & \multicolumn1c{$\text{Acc}_\text{10}$} & \multicolumn1c{$\text{Acc}_\text{1}$} & \multicolumn1c{$\text{Acc}_\text{10}$}   
    & \multicolumn1c{$\text{Acc}_\text{1}$} & \multicolumn1c{$\text{Acc}_\text{10}$} & \multicolumn1c{$\text{Acc}_\text{1}$} & \multicolumn1c{$\text{Acc}_\text{10}$}
    & \multicolumn1c{$\text{Acc}_\text{1}$} & \multicolumn1c{$\text{Acc}_\text{10}$} & \multicolumn1c{$\text{Acc}_\text{1}$} & \multicolumn1c{$\text{Acc}_\text{10}$}                                                                  \\
    \midrule
    $\text{Ridge}_{\text{X} \to \text{Y}}$            & 0.7                   & 2.8                    & 0.4                   & 1.6           & 0.3           & 1.2           & 0.2  & 0.8  & 0.2  & 1.3  & 2.9  & 8.2  \\
    $\text{Ridge}_{\text{X} \to \text{Y}}$ + NICDM    & 7.2                   & 17.9                   & 4.3                   & 11.4          & 3.5           & 9.8           & 2.1  & 6.3  & 6.1  & 16.8 & 14.4 & 32.6 \\
    $\text{Ridge}_{\text{Y} \to \text{X}}$ (proposed) & \textbf{31.5}         & \textbf{54.5}          & \textbf{21.6}         & \textbf{43.0} & \textbf{36.6} 
                                                      & \textbf{58.6}         & \textbf{21.9}          & \textbf{43.6}         & \textbf{31.9} & \textbf{56.3} & \textbf{31.1} & \textbf{55.4}                           \\
    CCA                                               & 17.9                  & 32.7                   & 12.9                  & 25.2          & 27.0          & 41.7          & 15.2 & 28.8 & 20.2 & 37.3 & 7.4  & 18.9 \\
    CCA + NICDM                                       & 21.9                  & 42.3                   & 16.1                  & 33.9          & 31.1          & 50.1          & 18.7 & 37.0 & 25.9 & 48.8 & 12.4 & 30.7 \\
    \bottomrule
  \end{tabular}
  }

  \subfloat[
    $N_k$ skewness on bilingual lexicon extraction. 
  ]{
  \label{tab:ble-skew}

  \begin{tabular}{l *{6}{@{\hspace{1.25em}} rr}}
    \toprule
                                                       & \multicolumn2c{cs} & \multicolumn2c{de} & \multicolumn2c{fr} & \multicolumn2c{ru}
                                                       & \multicolumn2c{ja} & \multicolumn2c{hi}                                                                                                                          \\
    \cmidrule(r{1em}){2-3} \cmidrule(r{1em}){4-5} \cmidrule(r{1em}){6-7} \cmidrule(r{1em}){8-9} \cmidrule(r{1em}){10-11} \cmidrule{12-13}
  method                                             & \multicolumn1c{$N_{1}$}            & \multicolumn1c{$N_{10}$}           & \multicolumn1c{$N_{1}$}          & \multicolumn1c{$N_{10}$}       & \multicolumn1c{$N_{1}$}       & \multicolumn1c{$N_{10}$}       & \multicolumn1c{$N_{1}$} & \multicolumn1c{$N_{10}$} 
    & \multicolumn1c{$N_{1}$}            & \multicolumn1c{$N_{10}$}           & \multicolumn1c{$N_{1}$}            & \multicolumn1c{$N_{10}$} \\
    \midrule
     $\text{Ridge}_{\text{X} \to \text{Y}}$            & 50.29              & 23.84              & 43.00              & 24.37          & 67.79         & 35.83          & 95.05   & 35.36 & 62.12 & 22.78 & 23.75 & 10.84 \\
     $\text{Ridge}_{\text{X} \to \text{Y}}$ + NICDM    & 41.56              & 20.38              & 39.32              & 20.82          & 57.18         & 25.97          & 89.08   & 30.70 & 57.57 & 21.62 & 20.33 & 9.21  \\
     $\text{Ridge}_{\text{Y} \to \text{X}}$ (proposed) & \textbf{11.91}     & \textbf{10.74}     & \textbf{12.49}     & \textbf{11.94} & \textbf{2.56} 
                                                       & \textbf{2.77}      & \textbf{4.28}      & \textbf{4.18}      & \textbf{5.15}  & \textbf{6.76} & \textbf{10.45} & \textbf{6.14}                                   \\
     CCA                                               & 28.00              & 18.67              & 36.66              & 18.98          & 30.18         & 15.95          & 51.92   & 21.60 & 37.73 & 18.27 & 22.31 & 8.95  \\
     CCA + NICDM                                       & 25.00              & 17.13              & 32.94              & 17.65          & 25.20         & 14.65          & 42.61   & 20.72 & 34.66 & 13.16 & 22.00 & 8.46  \\
    \bottomrule
  \end{tabular}
  }

  \subfloat[
    Image labeling.
  ]{
  \label{tab:image-result}

  \begin{tabular}{l r r r}
    \toprule
    method                                            & \multicolumn{1}{c}{MAP}        & \multicolumn{1}{c}{$\text{Acc}_1$} & \multicolumn{1}{c}{$N_{1}$}   \\
    \midrule
    $\text{Ridge}_{\text{X} \to \text{Y}}$            & 46.0          & 22.6           & 2.61          \\
    $\text{Ridge}_{\text{X} \to \text{Y}}$ + NICDM    & 54.2          & 34.5           & 2.17          \\
    $\text{Ridge}_{\text{Y} \to \text{X}}$ (proposed) & \textbf{62.5} & \textbf{41.3}  & \textbf{0.08} \\
    CCA                                               & 26.1          & 9.2            & 2.00          \\
    CCA + NICDM                                       & 26.9          & 9.3            & 2.42          \\ 
    \bottomrule
  \end{tabular}
  }

\end{table*}

Table~\ref{tab:result} shows the experimental results.
The trends are fairly clear:
The proposed approach, $\text{Ridge}_{\text{Y} \to \text{X}}$, %
outperformed
other methods in both
MAP %
and $\text{Acc}_k$,
over all tasks.
$\text{Ridge}_{\text{X} \to \text{Y}}$ %
and CCA
combined with NICDM performed better than those with Euclidean distances,
although they still lagged behind the proposed method
$\text{Ridge}_{\text{Y} \to \text{X}}$ even with NICDM.

The $N_k$ skewness achieved by $\text{Ridge}_{\text{Y} \to \text{X}}$ was lower (i.e., better)
than that of compared methods, meaning that it effectively suppressed 
the emergence of hub labels.
In contrast,
$\text{Ridge}_{\text{X} \to \text{Y}}$ produced a high skewness 
which was in line with its poor prediction accuracy.
These results support the expectation we expressed in the discussion in Sect.~\ref{sec:regression-hubness}.

The results presented in the tables show that
the degree of hubness ($N_{k}$) for all tested methods
inversely correlates with the correctness of the output rankings,
which strongly suggests that hubness is one major factor affecting the prediction accuracy.

For the AwA image dataset,
Akata et.~al. \cite[the fourth row (CNN) and second column ($\varphi^{w}$) of Table~2]{Akata2014}
reported a 39.7\% $\text{Acc}_1$ score, %
using image representations trained with CNNs,
and 100-dimensional word representations trained with word2vec.
For comparison,
our proposed approach, $\text{Ridge}_{\text{Y} \to \text{X}}$,
was evaluated in a similar setting:
We used the DeCAF features (which were also trained with CNNs) 
without random projection
as the image representation, 
and 100-dimensional word2vec word vectors.
In this setup, $\text{Ridge}_{\text{Y} \to \text{X}}$ achieved a 40.0\% $\text{Acc}_1$ score.
Although the experimental setups are not exactly identical and thus the results are not directly comparable,
this suggests that even linear ridge regression can potentially perform as well as
more recent methods, such as Akata et~al.'s,
simply by exchanging the observation and response variables. %

\section{Conclusion}
\label{sec:conclusion}

This paper has presented our formulation of ZSL as a regression problem
of finding a mapping from the target space to the source space,
which opposes the way in which regression has been applied to ZSL to date.
Assuming a simple model in which data follows a multivariate normal distribution,
we provided an explanation as to why the proposed direction is preferable,
in terms of the emergence of hubs in the subsequent nearest neighbor search step.
The experimental results showed that the proposed approach outperforms
the existing regression-based and CCA-based approaches to ZSL.

Future research topics include:
(i) extending the analysis of Sect.~\ref{sec:regression-hubness} 
to cover multi-modal data distributions,
or
other similarity/distance measures such as cosine;
(ii)
investigating the influence of mapping directions in other regression-based ZSL methods,
including neural networks; and
(iii)
investigating the emergence of hubs in CCA.

\subsubsection*{Acknowledgments.}

We thank anonymous reviewers for their valuable comments and suggestions.
MS was supported by JSPS Kakenhi Grant no.~15H02749.

\bibliographystyle{splncs03}

\begin{thebibliography}{10}
\providecommand{\url}[1]{\texttt{#1}}
\providecommand{\urlprefix}{URL }

\bibitem{Acs2013}
\'{A}cs, J., Pajkossy, K., Kornai, A.: Building basic vocabulary across 40
  languages. In: Proceedings of the 6th Workshop on Building and Using
  Comparable Corpora. pp. 52--58 (2013)

\bibitem{Akata2014}
Akata, Z., Lee, H., Schiele, B.: Zero-shot learning with structured embeddings.
  arXiv preprint arXiv:1409.8403v1  (2014)

\bibitem{Al-Rfou2013}
Al-Rfou, R., Perozzi, B., Skiena, S.: Polyglot: {D}istributed word
  representations for multilingual {NLP}. In: CoNLL '13. pp. 183--192 (2013)

\bibitem{Bakir2007}
Bakir, G., Hofmann, T., Sch{\"o}lkopf, B., Smola, A.J., Taskar, B.,
  Vishwanathan, S.V.N. (eds.): Predicting Structured Data. MIT press (2007)

\bibitem{Bingham2001}
Bingham, E., Mannila, H.: Random projection in dimensionality reduction:
  Applications to image and text data. In: KDD '01. pp. 245--250 (2001)

\bibitem{Dasgupta2000}
Dasgupta, S.: Experiments with random projection. In: UAI '00. pp. 143--151
  (2000)

\bibitem{Dinu2014}
Dinu, G., Baroni, M.: How to make words with vectors: Phrase generation in
  distributional semantics. In: ACL '14. pp. 624--633 (2014)

\bibitem{Dinu2015}
Dinu, G., Baroni, M.: Improving zero-shot learning by mitigating the hubness
  problem. In: Workshop at ICLR '15 (2015)

\bibitem{Donahue2013}
Donahue, J., Jia, Y., Vinyals, O., Hoffman, J., Zhang, N., Tzeng, E., Darrell,
  T.: De{CAF}: A deep convolutional activation feature for generic visual
  recognition. arXiv preprint arXiv:1310.1531  (2013)

\bibitem{Farhadi2009}
Farhadi, A., Endres, I., Hoiem, D., Forsyth, D.: Describing objects by their
  attributes. In: CVPR '09. pp. 1778--1785 (2009)

\bibitem{Frome2013}
Frome, A., Corrado, G.S., Shlens, J., Bengio, S., Dean, J., Ronzato, M.,
  Mikolov, T.: Devise: A deep visual-semantic embedding model. In: NIPS '13.
  pp. 2121--2129 (2013)

\bibitem{Hardoon2004}
Hardoon, D.R., Szedmak, S., Shawe-Taylor, J.: Canonical correlation analysis:
  {A}n overview with application to learning methods. Neural Computation  16,
  2639--2664 (2004)

\bibitem{Jegou2007}
Jegou, H., Harzallah, H., Schmid, C.: A contextual dissimilarity measure for
  accurate and efficient image search. In: CVPR '07. pp. 1--8 (2007)

\bibitem{Lampert2009}
Lampert, C.H., Nickisch, H., Harmeling, S.: Learning to detect unseen object
  classes by between-class attribute transfer. In: CVPR '09. pp. 951--958
  (2009)

\bibitem{Larochelle2008}
Larochelle, H., Erhan, D., Bengio, Y.: Zero-data learning of new tasks. In:
  AAAI '08. pp. 646--651 (2008)

\bibitem{Lazaridou2014}
Lazaridou, A., Bruni, E., Baroni, M.: {Is this a wampimuk? Cross-modal mapping
  between distributional semantics and the visual world}. In: ACL '14. pp.
  1403--1414 (2014)

\bibitem{Manning2008}
Manning, C.D., Raghavan, P., Sch\"{u}tze, H.: Introduction to Information
  Retrieval. Cambridge University Press (2008)

\bibitem{Mika1998}
Mika, S., Sch\"{o}lkopf, B., Smola, A., M\"{u}ller, K.R., Scholz, M.,
  R\"{a}tsch, G.: Kernel {PCA} and de-noising in feature space. In: NIPS '98.
  pp. 536--542 (1998)

\bibitem{Mikolov2013a}
Mikolov, T., Chen, K., Corrado, G., Dean, J.: Efficient estimation of word
  representations in vector space. In: Workshop at ICLR '13 (2013)

\bibitem{Mikolov2013}
Mikolov, T., Le, Q.V., Sutskever, I.: Exploiting similarities among languages
  for machine translation. arXiv preprint arXiv:1309.4168  (2013)

\bibitem{Norouzi2014}
Norouzi, M., Mikolov, T., Bengio, S., Singer, Y., Shlens, J., Frome, A.,
  Corrado, G.S., Dean, J.: Zero-shot learning by convex combination of semantic
  embeddings. In: ICLR '14 (2014)

\bibitem{Palatucci2009}
Palatucci, M., Pomerleau, D., Hinton, G., Mitchell, T.M.: Zero-shot learning
  with semantic output codes. In: NIPS '09. pp. 1410--1418 (2009)

\bibitem{Radovanovic2010}
Radovanovi\'{c}, M., Nanopoulos, A., Ivanovi\'{c}, M.: {Hubs in space: Popular
  nearest neighbors in high-dimensional data}. Journal of Machine Learning
  Research  11,  2487--2531 (2010)

\bibitem{Schnitzer2012}
Schnitzer, D., Flexer, A., Schedl, M., Widmer, G.: {Local and global scaling
  reduce hubs in space}. Journal of Machine Learning Research  13,  2871--2902
  (2012)

\bibitem{Socher2013}
Socher, R., Ganjoo, M., Manning, C.D., Ng, A.Y.: Zero-shot learning through
  cross-modal transfer. In: NIPS '13. pp. 935--943 (2013)

\bibitem{Suzuki2013}
Suzuki, I., Hara, K., Shimbo, M., Saerens, M., Fukumizu, K.: {Centering
  similarity measures to reduce hubs}. In: EMNLP '13. pp. 613--623 (2013)

\bibitem{Tomasev2013}
Toma\v{s}ev, N., Rupnik, J., Mladeni\'{c}, D.: {The role of hubs in
  cross-lingual supervised document retrieval}. In: PAKDD '13. pp. 185--196
  (2013)

\bibitem{Vinokourov2002}
Vinokourov, A., Shawe-Taylor, J., Cristianini, N.: Inferring a semantic
  representation of text via cross-language correlation analysis. In: NIPS '02.
  pp. 1473--1480 (2002)

\bibitem{Weston2002}
Weston, J., Chapelle, O., Vapnik, V., Elisseeff, A., Sch{\"o}lkopf, B.: Kernel
  dependency estimation. In: NIPS '02. pp. 873--880 (2002)

\end{thebibliography}

\end{document}